\documentclass[letterpaper, 10 pt, journal, twoside]{IEEEtran}
\usepackage[T1]{fontenc}
\usepackage[latin9]{inputenc}
\usepackage{xcolor}
\usepackage{verbatim}
\usepackage{float}
\usepackage{amsmath}

\usepackage[T1]{fontenc}
\usepackage[latin9]{inputenc}
\usepackage{color}
\usepackage{verbatim}
\usepackage{float}
\usepackage{amsmath}

\usepackage{amsthm}
\usepackage{amssymb}
\usepackage{stmaryrd}
\usepackage{stackrel}
\usepackage{graphicx}
\usepackage{wasysym}
\usepackage{mathtools}
\usepackage{hyperref}
\usepackage{url}   
\usepackage{booktabs} 
\usepackage{nicefrac} 
\usepackage{authblk}
\usepackage{multirow}
\usepackage{xcolor}
\usepackage{microtype}
\usepackage{paralist}

\makeatletter


\floatstyle{ruled}
\newfloat{algorithm}{tbp}{loa}
\providecommand{\algorithmname}{Algorithm}
\floatname{algorithm}{\protect\algorithmname}

\theoremstyle{plain}
\newtheorem{thm}{\protect\theoremname}
\theoremstyle{definition}
\newtheorem{defn}{\protect\definitionname}
\theoremstyle{definition}
\newtheorem{problem}{\protect\problemname}
\theoremstyle{plain}
\newtheorem{lem}{\protect\lemmaname}
\theoremstyle{definition}
\newtheorem{example}{\protect\examplename}
\theoremstyle{remark}
\newtheorem{rem}{\protect\remarkname}
\theoremstyle{plain}

\usepackage{cite}
\usepackage{algpseudocode}
\usepackage{setspace}
\IEEEoverridecommandlockouts

\usepackage{algpseudocode}
\usepackage{setspace}
\usepackage{color}
\usepackage{comment}
\usepackage{tikz}
\usepackage{xspace}
\tikzset{>=latex}
\usetikzlibrary{fit,automata,positioning,angles,quotes}

\makeatother

\usepackage{babel}
\providecommand{\corollaryname}{Corollary}
\providecommand{\definitionname}{Definition}
\providecommand{\examplename}{Example}
\providecommand{\lemmaname}{Lemma}
\providecommand{\problemname}{Problem}
\providecommand{\remarkname}{Remark}
\providecommand{\theoremname}{Theorem}

\newcommand{\UntilOp}{\mathcal{U}}
\newcommand{\Eventually}{\diamondsuit}
\newcommand{\Always}{\square}
\newcommand{\Next}{\Circle}
\newcommand{\AP}{{AP}}

\newcommand{\dynsys}{\ensuremath{\mathfrak{S}}\xspace}

\usepackage{amsmath}               
  {
      \theoremstyle{plain}
      
      \theoremstyle{plain}
      
  }
  
\SetSymbolFont{stmry}{bold}{U}{stmry}{m}{n}

\def\BibTeX{{\rm B\kern-.05em{\sc i\kern-.025em b}\kern-.08em
		T\kern-.1667em\lower.7ex\hbox{E}\kern-.125emX}}
		
\hyphenation{op-tical net-works semi-conduc-tor}

\begin{document}

\title{Learning Minimally-Violating Continuous Control for Infeasible Linear Temporal Logic Specifications

\thanks{DISTRIBUTION STATEMENT A. Approved for public release. Distribution is unlimited. This material is based upon work supported by the Under Secretary of Defense for Research and Engineering under Air Force Contract No. FA8702-15-D-0001. Any opinions, findings, conclusions or recommendations expressed in this material are those of the author(s) and do not necessarily reflect the views of the Under Secretary of Defense for Research and Engineering.}

\thanks{$^{1}$Mingyu Cai and Cristian-Ioan Vasile are with Mechanical Engineering, Lehigh University, Bethlehem, PA, 18015 USA.
        {\tt\footnotesize mic221@lehigh.edu,  
crv519@lehigh.edu}}
\thanks{$^{2}$Makai Mann, Zachary Serlin and Kevin Leahy are with MIT Lincoln Laboratory, Lexington, MA, 02421, USA.
        {\tt\footnotesize makai.mann@ll.mit.edu,  
Zachary.Serlin@ll.mit.edu, kevin.leahy@ll.mit.edu}}

}

\author{Mingyu Cai$^{1}$, Makai Mann$^{2}$, Zachary Serlin$^{2}$, Kevin Leahy$^{2}$, Cristian-Ioan Vasile$^{1}$
}

\markboth{2023 American Control Conference. Preprint Version. }
{Cai \MakeLowercase{\textit{et al.}}: Learning Minimally-Violating Continuous Control for Infeasible LTL Specifications} 

\maketitle

\begin{abstract}
This paper explores continuous-time control synthesis for target-driven navigation to satisfy complex high-level tasks expressed in linear temporal logic (LTL). We propose a model-free framework using deep reinforcement learning (DRL) where the underlying dynamical system is unknown (an opaque box). Unlike prior work, this paper considers scenarios where the given LTL specification might be infeasible and therefore cannot be accomplished globally. Instead of modifying the given LTL formula, we provide a general DRL-based approach to satisfy it with minimal violation. 
 To do this, we transform a previously multi-objective DRL problem, which requires simultaneous automata satisfaction and minimum violation cost, into a single objective. By guiding the DRL agent with a sampling-based path planning algorithm for the potentially infeasible LTL task, the proposed approach mitigates the myopic tendencies of DRL, which are often an issue when learning general LTL tasks that can have long or infinite horizons. This is achieved by decomposing an infeasible LTL formula into several reach-avoid sub-tasks with shorter horizons, which can be trained in a modular DRL architecture. Furthermore, we overcome the challenge of the
 exploration process for DRL in complex and cluttered environments by using path planners to design rewards that are dense in the configuration space. The benefits of the presented approach are demonstrated through testing on various complex nonlinear systems and compared with state-of-the-art baselines. The video demonstration can be found here:~\url{https://youtu.be/DqesqBsja9k}.
\end{abstract}

\begin{IEEEkeywords}
Formal Methods in Robotics and Automation, Deep Reinforcement Learning, Sampling-based Method
\end{IEEEkeywords}



\section{Introduction}

\global\long\def\Dist{\operatorname{Dist}}%
\global\long\def\Eval{\operatorname{Eval}}%

Autonomous agents operating in complex environments must often accomplish high-level tasks while subject to various uncertainties. Possible uncertainties include unknown obstacles or targets, and impassable or unexpected terrain. 
Additionally, a principled analytical model of the robot is typically (partially) unknown or hard to obtain. 
Operating in a complex, uncertain environment with unknown dynamics inevitably leads to deviations from the planned trajectory which can cause the original desired task to be infeasible. Ideally, a deployed system will fail gracefully when faced with an impossible task.

Motivated by these challenges, this work investigates model-free control of an autonomous system for satisfying a potentially infeasible high-level task with minimal violation while operating in a complex environment. 


In the field of model-free navigation control,
Reinforcement Learning (RL) is a popular unsupervised technique that optimizes long-term expected rewards to learn desired behaviors~\cite{sutton2018reinforcement}. 
Recently, Deep Reinforcement Learning (DRL) techniques, such as the actor-critic method~\cite{haarnoja2018soft}, have been shown to be capable of learning continuous controllers for unknown (opaque-box) dynamics. Existing works~\cite{long2018towards, chiang2019learning} apply DRL algorithms to solve standard goal-oriented problems, but does not explicitly investigate complex navigation tasks. 
Linear Temporal Logic (LTL) is a formal language that has gained attention in recent years for expressing high-level, complex robotics tasks. Specifications have been shown to be effective at directing RL agents to learn desired policies.
Typically within the literature, a discrete robot system is abstracted as a discrete Markov Decision Process (MDP) model and composed with an automaton representing the desired LTL formula to create a product automaton for learning or planning ~\cite{sadigh2014learning, kantaros2022accelerated, xu2020joint, cai2021reinforcement}.  This approach has been extended to continuous systems
\cite{li2019formal,icarte2022reward}, where the LTL formulas are only defined over finite horizons. For general continuous control subject to LTL satisfaction over infinite horizons, prior works~\cite{Cai2021modular, cai2021safe, cai2022overcoming} proposed a modular architecture by decomposing the task into sub-tasks using automata states. 
All this related work designs RL rewards according to the discrete switch of an automaton, which requires the RL agent to visit every region of interest to gain experience in each automaton state. Even relying on reward shaping to enrich the intermediate rewards, this paradigm can be extremely challenging for DRL in complex, cluttered environments. Primarily, these approaches are inefficient at exploring the environment, which is a critical component of performing 
RL~\cite{explorationref,hrl-survey}. Crucially, to our knowledge, no previous work has investigated infeasible LTL tasks for continuous control. In this work, we focus on the problem including two aspects:
infeasible LTL tasks and cluttered environments using DRL for model-free continuous control.

\textbf{Related works:\space}
In the control community, signal temporal logic (STL) and LTL are two popular formal languages. Maximum satisfaction of STL can be achieved via optimizing its robustness degrees~\cite{raman2014model,rodionova2022combined}.
Minimum violation  of LTL specifications over finite horizons has been
considered in~\cite{Tumova2013, Lahijanian2016, vasile2017minimum, niu2018minimum, Lacerda2019, niu2020optimal, rahmani2020you, wongpiromsarn2021minimum, KaKaVa-IROS-2021} using sampling-based methods, graph-based optimization algorithms, and dynamic programming to solve the corresponding motion planning problem.
For LTL specifications over infinite horizons, general violation measurements are proposed in \cite{kim2015minimal} by quantitatively revising LTL automata. The approaches are applied in~\cite{Guo2015,cai2020receding, cai2021optimal}, where they build product structures and encode potentially conflicting parts of a specification as a soft constraint. However, the prior literature generally assumes known dynamical systems with abstracted navigation controllers and only focuses on motion planning problems ~\cite{Tumova2013, Lahijanian2016, vasile2017minimum, Lacerda2019, niu2020optimal, wongpiromsarn2021minimum, Guo2015,cai2020receding}.
Our work improves on this by directly synthesizing a low-level controller for unknown dynamics in a model-free manner that minimally violate LTL satisfaction over both finite and infinite horizons.

Previous work~\cite{hasanbeig2019reinforcement, Cai2020} has tackled RL while considering infeasible tasks in discrete state-action spaces.
In particular, the authors in~\cite{hasanbeig2019reinforcement} use a limit-deterministic generalized B\"{u}chi automaton (LDGBA)~\cite{Sickert2016} whose acceptance condition requires visiting all accepting sets infinitely often.
This work accounts for the infeasible solution case by visiting the accepting sets as much as possible (if at all).
But many LTL formulas only have one accepting state in the LDGBA and this strategy often fails to generalize to those cases.
The learning objectives in~\cite{Cai2020} ensure satisfaction of the LDGBA acceptance condition and, thus, LTL formulae, and reduce the violation cost, resulting in a multi-objective RL (MORL) problem.
However, providing minimum-violation performance guarantees for MORL continues to be an open problem~\cite{van2014multi}.

Commonly in the literature for navigation in complex environments, motion planning algorithms are first used to find possible paths, and then path-tracking controllers are employed to follow that path ~\cite{qin2003survey, manchester2017control, chen2021fastrack}. This approach decomposes into a goal-reaching control problem and can be easily adapted for LTL satisfaction~\cite{luo2021abstraction,VaLiBe-IJRR-2020}. Classic approaches however require known dynamic models and may be computationally expensive for nonlinear and high-dimensional systems. This fundamentally motivates our focus on employing model-free control techniques.

\noindent \textbf{Contributions:} In this paper, we translate the MORL problem first analyzed in~\cite{Cai2020} into a standard DRL problem and apply model-free geometric motion planning to guide the learning process. The generated trajectories are shown to minimally violate a given infeasible LTL specification. By learning a controller to reach waypoints in a geometric path, our approach automatically obtains an optimal control policy for minimally-violating the infeasible LTL task specification. This paper's main contributions are: 
(i) To the best of our knowledge, this is the first model-free learning-based continuous control for potentially infeasible LTL satisfaction with minimum-violation objectives. Such objectives facilitate meaningful action for infeasible LTL tasks. Note that even though we focus on continuous control problems, the framework can be immediately extended to discrete systems by integrating with traditional tabular approaches;
(ii)  Our framework can learn continuous control in cluttered environments and mitigates the myopic tendencies of DRL by decomposing the global task into sub-tasks, which can be trained compositionally; 
(iii)  We demonstrate the benefits by comparing with several baselines in two nonlinear systems: a Dubins car and quadrotor.

\section{Preliminaries}
\label{sec:Preliminaries}

We use mathematical expressions for several notations, which are summarized in Table~\ref{tab:Notation}.

\begin{table}
	\caption{Summary of Notations.}
        \label{tab:Notation}
\centering{}\resizebox{0.48\textwidth}{!}{
	\begin{tabular}{|c|c|c|c|c|c|c}
	    \hline
		\textbf{Symbol} & \textbf{Notation}  \\ 
		\hline
		\text{$Proj:S\shortrightarrow X$} & \text{Projection between $S$ and $X$} \\
		\hline
		\text{$proj|_{X}:Q_{P}\shortrightarrow X$} & \text{Projection between $Q_P$ and $X$} \\
		\hline
		\text{$proj|_{Q}:Q_{P}\shortrightarrow Q$} & \text{Projection between $Q_P$ and$Q$} \\
		\hline
		\text{$L: S\shortrightarrow 2^{\AP}$} & \text{Label function of state space of dynamics } \\
		\hline
		\text{$L_X: X\shortrightarrow 2^{\AP}$} & \text{Label function of workspace} \\
		\hline
		\text{$L_P:Q_{P}\shortrightarrow 2^{\AP}$} & \text{Label function of relaxed PBA} \\
		\hline
  		\text{$S$} & \text{State space of an unknown dynamic system \dynsys} \\
            \hline
            \text{$X$} & \text{State space of a geometric workspace} \\
            \hline
            \text{$Q_P=X\times Q$} & \text{State space of a relaxed PBA} \\
            \hline
	\end{tabular}}
\end{table}

\textbf{Agent:\space}The general evolution of a continuous-time dynamical system \dynsys starting from an initial state $s_{0}\in S_{0}$ is given by

\begin{equation}
    \dot{s}=f\left(s, a\right),
    \label{eq:dynamics}
\end{equation}

where $s\in S\subseteq \mathbb{R}^{n}$ is the state vector in the compact set $S$, $a\in A\subseteq \mathbb{R}^{m}$ is the control input. 
We assume the flow field $f:\mathbb{R}^{n}\times\mathbb{R}^{m}\rightarrow\mathbb{R}^{n}$ is uniformly continuous in time and Lipschitz continuous in $s$ for fixed $a$. Under these assumptions there exists a unique solution of $\dynsys$ for a given $a$, providing trajectories of the system~\cite{Earl1955}. 
In this work, the function $f\left(s, a\right)$ is assumed to be unknown. 

\begin{defn}
\label{def:project}
For a robot operating in an environment $Env$, the geometric workspace can be represented by a compact subset $X\subset\mathbb{R}^{d}, d\in \left\{2,3\right\}$. 
The relation between dynamics $\dynsys$ and workspace $X$ is defined by the projection $Proj:S\shortrightarrow X$. 

The space $X$ contains regions of interest that are labeled by a set of atomic propositions $\AP$, with the labeling function $L_X:X\shortrightarrow2^{\AP}$. Let $L:S\shortrightarrow2^{\AP}$ be a labeling function over $\dynsys$ i.e., $L(s) = L_X(Proj(s))$.
\end{defn}

\noindent
\textbf{Reinforcement Learning:\ } The interactions between a robot with dynamics $\dynsys$ and an environment $Env$ can be captured as a continuous-labeled Markov Decision Process (cl-MDP). A cl-MDP is a tuple $\mathcal{M}=(S, S_{0}, A,p_{S}, R, \gamma, L)$,
where $S\subseteq\mathbb{R}^{n}$ is a continuous state space, $S_{0}$ is a set of initial states, $A\subseteq\mathbb{R}^{m}$
is a continuous action space, and $p_{S}$ captures the unknown system dynamics as a distribution.
The distribution $p_{S}:\mathfrak{B}\left(\mathbb{R}^{n}\right)\times A\times S\shortrightarrow\left[0,1\right]$
is a Borel-measurable conditional transition kernel, s.t. $p_{S}\left(\left.\cdot\right|s,a\right)$
is a probability measure of the next state given current $s\in S$ and $a\in A$ over the Borel
space $\left(\mathbb{R}^{n},\mathfrak{B}\left(\mathbb{R}^{n}\right)\right)$,
where $\mathfrak{B}\left(\mathbb{R}^{n}\right)$ is the set of all
Borel sets on $\mathbb{R}^{n}$. $R:S\times A\times S\shortrightarrow\mathbb{R}$ is the reward function, and $\gamma\in(0,1)$ is the discount factor. L is the labeling function in Def.~\ref{def:project}.

Since the dynamics $\dynsys$ are an opaque-box, the transition relation $p_{S}\left(\left.\cdot\right|s,a\right)$ for any state-action pair is unknown. Actor-critic RL
algorithms~\cite{haarnoja2018soft} have been demonstrated as promising tools to solve continuous-control problems for this cl-MDP model, where each valid transition of cl-MDP follows $f$ defined in~\eqref{eq:dynamics} that is zero-order hold model for a continuous-time action.

\begin{rem}
Since the cl-MDP model has continuous state and action space, it's intractable to construct it explicitly and synthesize the standard model-checking algorithm~\cite{baier2008}. Inspired by prior work~\cite{Cai2021modular}, we track it on-the-fly vie a deep neural network.
\end{rem}

Let $\Pi$ denote the set of all policies for a cl-MDP, and $\pi\in\Pi$ denote a stochastic policy $\pi: S\times A\shortrightarrow [0,1]$, that maps states to distributions over actions.
During each episode and starting from the initial state $s_{0}\in S_{0}$, the agent observes the state $s_{t}\in S$ and executes an action $a_{t}\in A$ at each time step $t$, according to the policy $\pi(a_{t}| s_{t})$, and $Env$ returns the next state $s_{t+1}$ sampled from $p_{S}(s_{t+1}|s_{t}, a_{t})$. The process is repeated until the episode is terminated.
The objective is to learn an optimal policy $\pi^{*}(a| s)$ that maximizes the expected discounted return $J(\pi)=\mathbb{E}^{\pi}\left[\stackrel[k=0]{\infty}{\sum}\gamma^{k}\cdot R(s_{k},a_{k},s_{k+1})\right]$.

\textbf{Linear Temporal Logic (LTL):\space}LTL is a formal language to describe complex properties and high-level specifications of a system. LTL formulae are built inductively from atomic propositions by applying Boolean and temporal operators to subformulae. 
The syntax in Backus-Naur form is given by:

\begin{equation*}
        \phi ::=  \top \mid \mu \mid \phi_1 \land \phi_2 \mid \lnot \phi_1 \mid \Next\phi \mid \phi_1 \UntilOp \phi_2 \mid  \Eventually\phi \mid \Always\phi,
\end{equation*}

where $\mu\in\AP$ is an atomic proposition, \emph{true} $\top$, \emph{negation} $\lnot$, and \emph{conjunction} $\land$ are propositional logic operators, and \emph{next} $\Next$ and \emph{until} $\UntilOp$ are temporal operators~\cite{baier2008}.
Alongside the standard operators introduced above, other propositional logic operators, such as \emph{false}, \emph{disjunction} $\lor$, and \emph{implication} $\rightarrow$, and temporal operators, such as \emph{always} $\Always$ and \emph{eventually} $\Eventually$, are derived from the standard operators.

The semantics of an LTL formula are interpreted over words, where a word is an
infinite (continuous) sequence  $o=o_{0}o_{1}\ldots$, with $o_{i}\in2^{\AP}$ for
all $i\geq0$, where $2^{\AP}$ represents the power set of $\AP$.
The satisfaction of an LTL formula $\phi$ by the word $o$ is denoted $o\models\phi$.
The detailed semantics of LTL can be found in~\cite{baier2008}.
In this paper, we are interested in continuous-time control policies. The \emph{next} operator is not always meaningful since it may require an immediate execution switch in the synthesized plans~space~\cite{kloetzer2008fully}. Based on the application of continuous space, we can either exclude the \emph{next} operator as is common in related work~\cite{kloetzer2008fully, luo2021abstraction} or properly design practical LTL tasks~\cite{Cai2021modular} in continuous scenarios.

\section{PROBLEM FORMULATION}

Consider a cl-MDP $\mathcal{M}=(S, S_{0}, A,p_{S}, R, \gamma, L)$.
The induced path under a policy $\pi$ over $\mathcal{M}$ is $\boldsymbol{s}_{\infty}^{\pi}=s_{0}\ldots s_{i}s_{i+1}\ldots$, where $p_{S}(s_{i+1}|s_{i}, a_{i})>0$ if $\pi(a_{i}| s_{i})>0$.

We extend the labeling function to traces such that $L\left(\boldsymbol{s}_{\infty}^{\pi}\right)=o_{0}o_{1}\ldots$
is the sequence of labels associated with $\boldsymbol{s}_{\infty}^{\pi}$.
We denote the satisfaction relation of the induced 
trace with respect to $\phi$ by $L(\boldsymbol{s}_{\infty}^{\pi})\models\phi$. The probability of satisfying $\phi$ under the policy $\pi$, starting from
an initial state $s_{0}\in S_{0}$, is defined as

\[
{\Pr{}_{M}^{\pi}(\phi)=\Pr{}_{M}^{\pi}(L(\boldsymbol{s}_{\infty}^{\pi})\models\phi\,\big|\,\boldsymbol{s}_{\infty}^{\pi}\in\boldsymbol{S}_{\infty}^{\pi}),}
\]
where $\boldsymbol{S}_{\infty}^{\pi}$ is the set of admissible
paths from the initial state $s_{0}$, under the policy ${\pi}$~\cite{baier2008}.

\begin{defn}
\label{def:feasibility}	
Given a cl-MDP $\mathcal{M}$, an LTL task
$\phi$ is fully feasible if and only if there exists a policy $\pi$ s.t. $\Pr{}_{M}^{\pi}(\phi)>0$. 
\end{defn}

Note that according to Def.~\ref{def:feasibility}, an infeasible case means no policy exists that satisfies the task, which can be interpreted as $\Pr{}_{M}^{\pi}(\phi)=0, \forall\pi\in\Pi$.

\begin{defn}
\label{def:infeasibility}	
Given a cl-MDP $\mathcal{M}$, the expected discounted violation cost with respect to a given LTL task $\phi$ under the policy $\pi$ is defined as
{
\begin{equation}
{J_{V}}(\mathcal{M}^{\pi},\phi)=\mathbb{E}^{\pi}\left[\stackrel[k=0]{\infty}{\sum} c_{V}(s_{i}, a_{i},s_{i+1},\phi)\right],\label{eq:violation}
\end{equation}} %
where $c_{V}(s,a,s',\phi)$ is formally defined in Def.~\ref{def:violation} as the violation cost of a transition $(s,a,s')$ with respect to $\phi$, and $a_{i}$ is the action generated based on the policy $\pi(s_{i})$.
\end{defn}

\begin{figure}[!t]
\centering
\centerline{\includegraphics[width=\columnwidth]{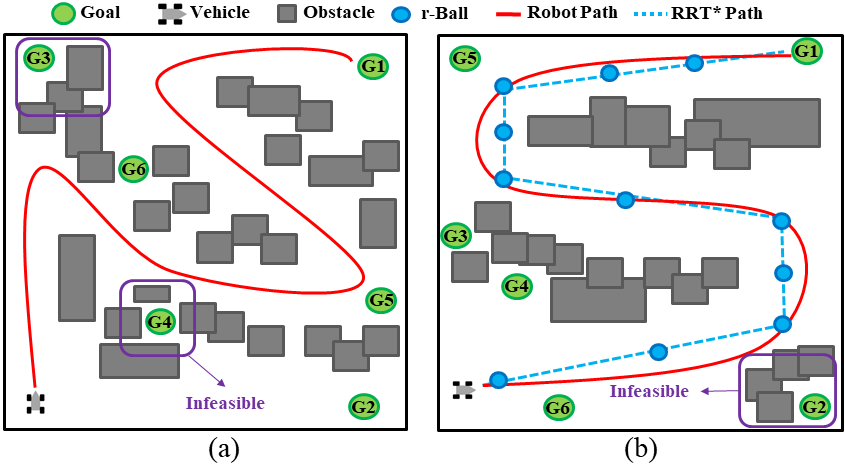}}
\caption{The figure shows the example described in Example~\ref{example1}, where the given LTL task is infeasible since some regions of interest, e.g., $\mathcal{G}_{3}, \mathcal{G}_{4}$ of (a). (a) The illustration that learning navigation using DRL is challenging in complex cluttered environments. (b) Demonstration of the effective design reward in Section~\ref{subsec:RL}.
}
\label{fig:running_example}

\end{figure}

\begin{problem}
\label{Prob1}Given a robot with unknown dynamics $\dynsys$ in an $Env$ with workspace $X$ containing  regions of interest, a navigation task in the form of an LTL task $\phi$ over $\AP$, their interactions can be captured as a cl-MDP.
The DRL objective is to find an optimal policy $\pi^{*}$ with the following capabilities: (i) if $\phi$ is feasible, $\Pr{}_{M}^{\pi^{*}}(\phi)>0$; (ii) if $\phi$ is infeasible s.t. $\Pr{}_{M}^{\pi}(\phi)=0, \forall\pi\in\Pi$, satisfy $\phi$ with minimum violation cost via minimizing $J_{V}(\mathcal{M}^{\pi},\phi)$.
\end{problem}



\begin{example}
\label{example1}
In Fig.~\ref{fig:running_example}, consider an autonomous vehicle with unknown dynamics deployed in a complex, cluttered environment containing a set of labeled goal regions, $\AP_{\mathcal{G}}=\left\{\mathcal{G}_1, \mathcal{G}_2,\ldots, \mathcal{G}_5\right\}$, and labeled obstacles $\mathcal{O}$. The LTL specification is given by $\phi=\Always\lnot\mathcal{O}\land\Always(\left(\Eventually\mathtt{\mathcal{G}_{1}}\land\Eventually\mathtt{\left(\mathcal{G}_{2}\land\Eventually\mathtt{\ldots\land \Eventually\mathcal{G}_{5}}\right)}\right)$. We observe that some goal regions are surrounded by obstacles such that $\phi$ is infeasible.

Furthermore, DRL-based navigation control typically uses either a sparse reward function that assigns positive rewards only when the robot reaches a destination, or leverages distance-to-goal measurements for a smoother reward. For LTL specifications, discrete goal-reaching may enable the automaton transitions, which is one requirement to receive automata-based rewards~\cite{li2019formal, Cai2021modular}. 
However, in cluttered environments such as those depicted in Fig.~\ref{fig:running_example}, it can be difficult for noisy policies to reach regions of interest at all during learning.
\end{example}

\section{SOLUTION}

In section~\ref{subsec:relaxed_PBA}, we develop an automaton model with relaxed constraints to address infeasible tasks.
In section~\ref{subsec:minimum}, we show how to synthesize a model-free plan with minimum violation cost in continuous space.
Finally, in section~\ref{subsec:RL}, we propose a novel DRL design to learn optimal policies that solve Problem~\ref{Prob1}.

\subsection{Relaxed Product Automaton for Infeasible LTL\label{subsec:relaxed_PBA}}

Let $dist: X\times X\rightarrow [0,\infty)$ define a  metric function that computes the geometric Euclidean distance between two states.
Prior work~\cite{cai2022overcoming} introduced a transition system for an unknown dynamical system to capture the interactions between geometric space  $X$ and $Env$.

\begin{defn}
\label{def:WTS}
A generalized weighted transition system (G-WTS) of $Env$ is a tuple $\mathcal{T}=(X, x_{0}, \rightarrow_{\mathcal{T}}, \AP, L_{X}, C_{\mathcal{T}})$, where $X$ is the configuration space of $Env$, $x_{0}$ is the initial state of the robot; $\rightarrow_{\mathcal{T}}\subseteq X\times X$ is the geometric transition relation s.t. $x\rightarrow_{\mathcal{T}}x'$ if $dist(x, x')\leq\eta$ and the straight line $\sigma$ connecting $x$ to $x_{new}$ is collision-free; $\AP$ is the set of atomic propositions that label regions; $L_X: X\rightarrow 2^\AP$ is the labeling function that returns an atomic proposition satisfied at location $x$; and $C_{\mathcal{T}}: (\rightarrow_{\mathcal{T}})\rightarrow\mathbb{R}^{+}$ is the geometric Euclidean distance, i.e.; $C_{\mathcal{T}}(x,x')=dist(x,x'),\forall (x,x')\in\rightarrow_{\mathcal{T}}$.
\end{defn}

Let $\tau_{\mathcal{T}}=x_{0}x_{1}x_{2}\dots$ denote a valid run of $\mathcal{T}$. As opposed to the standard WTS~\cite{kloetzer2008fully,luo2021abstraction} which includes dynamic state space $S$ and explicit dynamics $f(s,a)$ in transitions, only the geometric relations are available in our framework.

An LTL formula $\phi$ can be converted to a Non-deterministic B\"uchi Automaton (NBA) to evaluate satisfaction.

\begin{defn}
\label{def:NBA}
\cite{vardi1986automata} An NBA over $2^{\AP}$ is a tuple $\mathcal{B}=(Q, Q_{0}, \Sigma, \rightarrow_{\mathcal{B}}, Q_{F})$, where $Q$ is the set of states, $Q_{0}\subseteq Q$ is the set of initial states, $\Sigma=2^{\AP}$ is the finite alphabet, $\rightarrow_{\mathcal{B}}\subseteq Q\times\Sigma\times Q$ is the transition relation, and $Q_{F}\subseteq Q$ is the set of accepting states.
\end{defn}

A valid infinite run $\tau_{\mathcal{B}}=q_{0}q_{1}q_{2}\ldots$ of $\mathcal{B}$ is called accepting, if it intersects with $Q_{F}$ infinitely often. Infinite words $\tau_{o}=o_{0}o_{1}o_{2}\ldots, \forall o\in 2^{\AP}$ generated from an accepting run satisfy the corresponding LTL formula $\phi$. An LTL formula is converted into NBA using the \texttt{LTL2BA}\footnote{\url{http://www.lsv.fr/~gastin/ltl2ba/}}. We use $\mathcal{B}_{\phi}$ to denote the NBA of LTL formula $\phi$.

A common approach for synthesizing a geometric plan satisfying LTL formula $\phi$ is to construct the product B\"uchi automaton (PBA) between the G-WTS $\mathcal{T}$ and the NBA $\mathcal{B}$~\cite{kloetzer2008fully,luo2021abstraction,cai2022overcoming}. This approach assumes the given $\phi$ is feasible. Inspired by~\cite{kim2015minimal, Guo2015} to handle infeasible cases, we introduce a relaxed PBA and define corresponding violation costs.

\begin{defn}
\label{def:PBA}
Given the G-WTS $\mathcal{T}$ and the NBA $\mathcal{B}_{\phi}$, the relaxed PBA is a tuple $P=\mathcal{T}\times\mathcal{B}=(Q_P, Q^{0}_{P}, \rightarrow_{P}, Q^{F}_{P}, c_{P}, L_{P}, \Sigma)$,
where 
\begin{compactitem}[$\bullet$]
    \item $Q_{P}=X\times Q$ is the set of infinite product states, $Q^{0}_{P}=x_{0}\times Q$ is the set of initial states, $Q^{F}_{P}=X\times Q_{F}$ is the set of accepting states, and $\Sigma = 2^\AP$.
    \item $\rightarrow_{P}\subseteq Q_{P} \times Q_{P}$ is the transition function such that ${q_{P}=(x,q)\rightarrow_{P} q_{P}'=(x',q')}$ is a valid transition if and only if the following two conditions hold: $x\rightarrow_{\mathcal{T}}x'$ and $\exists \sigma_{\AP}\in\Sigma \text{ s.t. } q\overset{\sigma_{\AP}}{\rightarrow_{\mathcal{B}}}_{\phi}q'$,
    \item $c_{P}\colon (\rightarrow_{P})\rightarrow\mathbb{R}^{+}$ is the geometric cost function defined as the cost in the configuration space, e.g., $c_{P}(q_{p}=(x,q),q_{p}'=(x',q'))=C_{\mathcal{T}}(x,x'), \forall (q_{P},q_{P}')\in\rightarrow_{P}$, 
    \item $L_{P}\colon Q_{P}\to \Sigma$ is the labelling function s.t. $L_P(q_{P})=L_X(x), \forall q_{P}=(x,q)$,
    \item $c_{V}\colon (\rightarrow_{P})\rightarrow\mathbb{R}^{+}$ is the violation cost function for each transition with respect to the associated LTL formula $\phi$.
\end{compactitem}
\end{defn}

\noindent
The major novelty of the relaxed PBA is replacing the transition constraint $q\overset{L_X(x)}{\rightarrow_{\mathcal{B}_{\phi}}}q'$ of a standard PBA~\cite{baier2008}
with $\exists \sigma_{\AP}\in\Sigma \text{ s.t. } q\overset{\sigma_{\AP}}{\rightarrow_{\mathcal{B}}}_{\phi}q'$, to construct a more connected product graph. Next, we define the computation of the violation function $c_{V}$.


Suppose $\AP=\left\{\mu_{1}, \mu_{2},\ldots, \mu_{M}\right\}$
and consider an evaluation function $\Eval\colon\Sigma\shortrightarrow\left\{ 0,1\right\} ^{M}$, where $\Sigma = 2^\AP$ and $M=|\AP|$.
$\Eval(\sigma)=\left[v_{i}\right]^{M}_{i=1}$ and $v_{i}$ is a binary variable s.t. $v_{i}=1$
if $\mu_{i}\in \sigma$ and $v_{i}=0$ if $\mu_{i}\notin \sigma$, where
$i=1,2,\ldots,M$ and $\sigma\in\Sigma$. To quantify the difference between
two elements in $\Sigma$, consider $\rho\left(\sigma, \sigma'\right)=\left\Vert v-v^{\prime}\right\Vert _{1}=\sum_{i=1}^{M}\left|v_{i}-v_{i}^{\prime}\right|,$
where $v=\Eval\left(\sigma\right)$, $v^{\prime}=\Eval\left(\sigma^{\prime}\right)$,
$\sigma,\sigma^{\prime}\in\Sigma$, and $\left\Vert \cdot\right\Vert _{1}$
is the $L_{1}$ norm. The distance from $\sigma\in\Sigma$ to a set $\mathcal{X}\subseteq2^{\Sigma}$
is then defined as $D_{V}\left(\sigma,\ensuremath{\mathcal{X}}\right)=\underset{\sigma'\in\ensuremath{\mathcal{X}}}{\min}\rho\left(\sigma,\sigma'\right)$
if $\text{ }\sigma\notin\ensuremath{\mathcal{X}}$, and $D_{V}\left(\sigma,\ensuremath{\mathcal{X}}\right)=0$
otherwise.

\begin{defn}
\label{def:violation_cost}
In a relaxed PBA,
the violation cost of the transition ${q_{P}=(x,q)\rightarrow_{P} q_{P}'=(x',q')}$  imposed by $\phi$ can be
computed with $c_{V}(q_{P}, q_{P}')=D_{V}(L_P(q_{P}), \ensuremath{\mathcal{X}}(q, q'))$, 
where $\ensuremath{\mathcal{X}}(q, q')=\left\{ \sigma_{AP}\in\Sigma \mid q\overset{\sigma_{\AP}}{\rightarrow_{\mathcal{B}_{\phi}}}q'\right\}$ is the set of input symbols enabling the transition from $q$ to $q'$ in $\mathcal{B}_{\phi}$.
\end{defn}

Based on that, we can also measure the violation cost imposed on the dynamic system \dynsys as follows.

\begin{defn}
\label{def:violation}
Given a transition $(s,a,s')$ of dynamics and current automata state $q$, the next automata state $q'$ can be generated on-the-fly as  $q\overset{L(s)}{\rightarrow_{\mathcal{B}}}_{\phi}q'$. The violation cost of the transition with respect to $\phi$ can be obtained as $c_{V}(s,a,s',\phi) = D_{V}(L(s), \ensuremath{\mathcal{X}}(q, q'))$.
\end{defn}

\begin{rem}
Note that directly applying $c_{V}(s, a,s',\phi)$ into reward design for LTL satisfaction formulates a multi-objective DRL problem. Since DRL is a nonlinear regression process, the optimality performance of multi-objective DRL is hard to control. Instead, we use path planning over workspace $X$ for DRL guidance.    
\end{rem}

Given a valid trace $\tau_{P}=q^{0}_{P}q^{1}_{P}q^{2}_{P}\ldots$ of a PBA, the total violation cost can be computed with ${W_{V}}(\tau_{P})=\stackrel[i=0]{\infty}{\sum} c_{V}(q^{i}_{P},q^{i+1}_{P})$. The trace $\tau_{P}=q^{0}_{P}q^{1}_{P}q^{2}_{P}\ldots$ satisfies the acceptance conditions if it intersects with $Q^{F}_{P}$ infinitely often. Its corresponding words are obtained as $\tau_{o}=o_{0}o_{1}o_{2}\ldots$, $\forall o_{i}=L_{P}(q^{i}_{P})$ 
Now, the LTL satisfaction according to the accumulated violation cost can be measured as follows:

\begin{lem}
\label{lem:least-violation}
\cite{kim2015minimal}
    The words $\tau_{o}^{*}$ of an optimal trace $\tau_{P}^{*}$ satisfy the corresponding potentially infeasible LTL specification $\phi$ with minimum-violation guarantees if and only the following two conditions hold: (i) $\tau_{P}^{*}$ satisfies the acceptance condition and (ii) $W_{V}(\tau_{P}^{*})$ has the minimum total violation cost, i.e., $\tau_{P}^{*}=\underset{\tau_{P}\in\boldsymbol{\tau}_{\infty}}{\arg\min} \,W_{V}(\tau_{P})$, where $\boldsymbol{\tau}_{\infty}$ denotes the set of all valid traces in the relaxed PBA $P$.
\end{lem}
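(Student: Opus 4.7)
The plan is to prove both directions by exploiting the local structure of the relaxed PBA. First, I would observe that, by Definition~\ref{def:PBA}, a transition $q_P=(x,q)\rightarrow_{P} q_P'=(x',q')$ exists whenever there is \emph{some} $\sigma_{\AP}\in\Sigma$ enabling the NBA step $q\rightarrow_{\mathcal{B}_\phi} q'$, regardless of whether the actual label $L_P(q_P)=L_X(x)$ lies in the enabling set $\mathcal{X}(q,q')$. The violation cost from Definition~\ref{def:violation_cost} is then $c_V(q_P,q_P')=D_V(L_P(q_P),\mathcal{X}(q,q'))$, and this quantity vanishes if and only if $L_P(q_P)\in\mathcal{X}(q,q')$, i.e.\ exactly when the transition is also legal in the standard (non-relaxed) PBA. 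Hence a trace $\tau_P$ of the relaxed PBA has $W_V(\tau_P)=0$ if and only if it is an induced trace of the standard PBA, in which case its word $\tau_o$ satisfies $\phi$ in the classical sense.

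For sufficiency ($\Leftarrow$), I would suppose $\tau_P^*$ satisfies the acceptance condition and achieves the minimum of $W_V$ over $\boldsymbol{\tau}_\infty$. If $\phi$ is feasible, the minimum is zero by the observation above, and $\tau_o^*$ satisfies $\phi$ outright (trivially with minimum violation). If $\phi$ is infeasible, no accepting trace has zero cost, but $W_V(\tau_P^*)$ is the smallest attainable cumulative $L_1$-deviation between the true label sequence $\tau_o^*$ and any label sequence that would have driven $\mathcal{B}_\phi$ through an accepting run. By the definition of minimum-violation satisfaction in \cite{kim2015minimal}, $\tau_o^*$ is therefore a minimum-violation word for $\phi$.

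For necessity ($\Rightarrow$), I would proceed by contraposition. Any minimum-violation word $\tau_o^*$ is by definition accompanied by a witness accepting run in $\mathcal{B}_\phi$ under some relabeling whose cumulative $\rho$-distance to $\tau_o^*$ equals the minimum violation. Pairing $\tau_o^*$ with this witness yields an infinite trace $\tau_P^*$ in the relaxed PBA: the acceptance condition transfers because the NBA run visits $Q_F$ infinitely often (so $\tau_P^*$ visits $Q^F_P$ infinitely often), and the cumulative deviation transfers to $W_V(\tau_P^*)$ by the transition-wise additivity of $c_V$. If either (i) or (ii) were violated, one could extract a strictly better accepting trace and thereby a word with strictly smaller violation, contradicting minimality of $\tau_o^*$.

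The main obstacle will be formalizing the equivalence between the word-level notion of minimum-violation LTL satisfaction and the trace-level sum $W_V$: in particular, verifying that $L_1$-deviations decompose additively across time steps via $\Eval$ and $\rho$, and that the infimum over the infinite set $\boldsymbol{\tau}_\infty$ is attained (or approached by a suitable lasso-shaped trace). The argument essentially mirrors the one in \cite{kim2015minimal} for finite-alphabet B\"uchi automata; once Definitions~\ref{def:PBA} and~\ref{def:violation_cost} are in place, the adaptation to our setting is routine, the only subtlety being that $\mathcal{T}$ is built from the continuous configuration space $X$ rather than from an abstract discrete transition system.
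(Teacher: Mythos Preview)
The paper does not actually prove this lemma; it is imported wholesale from \cite{kim2015minimal}, as the citation attached to the lemma header indicates. There is therefore no ``paper's own proof'' to compare against---the authors treat the result as established and move on to use it.

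That said, your sketch is a faithful reconstruction of the argument one expects from \cite{kim2015minimal}, and you yourself note that it ``essentially mirrors'' that source. The key observations you isolate---(a) that $c_V(q_P,q_P')=0$ iff the transition is legal in the standard PBA, so $W_V=0$ characterizes exact satisfaction, and (b) that a minimum-violation word is witnessed by an accepting NBA run under a relabeling whose cumulative $\rho$-distance equals $W_V$---are exactly the structural facts the lemma encodes. Your caveat about attainment of the infimum over $\boldsymbol{\tau}_\infty$ is well placed: the paper sidesteps this by passing to lasso-shaped traces via TL-RRT* in Section~\ref{subsec:minimum}, where Lemma~\ref{lem:discrete_violation} reduces the infinite sum to a finite one over the prefix--suffix decomposition, so the minimum is taken over a discrete (tree-indexed) set and attainment is not an issue in practice. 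If you were to write this out fully you would want to make that reduction explicit rather than leave it as an obstacle.
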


The word $\tau_{o}^{*}$ of $\tau_{P}^{*}$ satisfies $\phi$ exactly if $W_{V}(\tau_{P}^{*})=0$. Let $proj|_{X}: Q_{P}\rightarrow X$ denote a projection s.t. $proj|_{X}(q_{p})=x, \forall q_{P}=(x,q)$. We use this operator to extract the geometric trajectory $\tau_{\mathcal{T}}^{*}=proj|_{X}(\tau_{P}^{*})$ for minimally-violating LTL satisfaction from the optimal trace $\tau_{P}^{*}$.

\begin{rem}
    This work considers unknown dynamic systems so that the generated path $\tau_{\mathcal{T}}^{*}$ is only optimal in the sense of geometric relations e.g., shortest euclidean distance.
\end{rem}

\subsection{Minimum-violation Synthesis and Decomposition\label{subsec:minimum}}

\textbf{Minimum-violation Synthesis:\space} Since this work considers continuous control and the state space $X$ of $\mathcal{T}$ is also continuous, it's impossible to explicitly construct the relaxed PBA, $P$, and find the optimal trace $\tau_{P}^{*}$. Instead of discretizing space, we apply Temporal Logic Rapidly-Exploring Random Trees (TL-RRT*)~\cite{luo2021abstraction}, which is abstraction-free and builds a tree in the configuration space incrementally. This technique has the same properties as RRT*~\cite{karaman2011sampling,VaLiBe-IJRR-2020}: it can generate probabilistically complete and asymptotically optimal solutions.

Formally, let $\tau_{F}$ denote any accepting run of PBA $P$. TL-RRT* leverages the fact that an accepting run $\tau_{F}$ is a lasso-type sequence with a prefix-suffix structure, i.e., $\tau_{F}=\tau^{pre}_{P}[\tau^{suf}_{P}]^{\omega}$, where the prefix part $\tau^{pre}_{P}=q^{0}_{P}q^{1}_{P}\ldots q^{K}_{P}$ is only executed once, and the suffix part $\tau^{suf}_{P}=q^{K+1}_{P}\ldots q^{K+l}_{P}$ with $q^{K+1}_{P}=q^{K+l}_{P}$ loops infinitely.
Following this idea, we build the prefix and suffix optimal trees, respectively. To ensure the acceptance condition, the set of goal states of the prefix tree $G^{pre}_{P}=(V^{pre}_{P},E^{pre}_{P})$ is defined as $Q^{pre}_{goal} = \left\{q_{P}=(x,q)\in X_{free}\times Q\subseteq Q_{P}\mid q\in Q_{F}\right\}$, where $X_{free}$ is the collision-free configuration space. The optimal goal states of the prefix tree are $Q^{*}_{goal}=V^{pre}_{P}\cap  Q^{pre}_{goal}$ that is regarded as the roots of suffix tree $G^{suf}_{P}=(V^{suf}_{P},E^{suf}_{P})$. The destination states of the suffix tree are 
\begin{equation*}
\begin{array}{c}
    Q^{suf}_{goal}(q^{*}_{P})=\left\{\right.q_{P}=(x,q)\in X_{free}\times Q\subseteq Q_{P}\mid \\
    q_{P}\rightarrow_{P}  q_{p}^{*}=(x^{*}, q^{*}), \forall q_{p}^{*}\in Q^{*}_{goal}\left\}\right..
\end{array}
\end{equation*}
We refer readers to~\cite{luo2021abstraction} for more details.

Unlike \cite{luo2021abstraction}, this work considers infeasible cases where rapidly-exploring random trees should be built on-the-fly based on the relaxed PBA and
our optimization objective includes both violation and geometric cost.
To  guarantee total minimum-violation, let the weight between any two product states in the tree be $w_{P}(q_{P}, q_{P}')=c_{P}(q_{P}, q_{P}')+\beta\cdot c_{V}(q_{P}, q_{P}')$, where  we select $\beta \gg 1$ to ensure the violation cost $c_{V}$ has higher priority than geometric distance $c_{P}$. Finally, we can find the minimum-violation path in the form of $\tau^{*}_{P}=\tau^{*}_{pre}[\tau^{*}_{suf}]^{\omega}$, which can be projected into the configuration space $\tau^{*}_{\mathcal{T}}=proj|_{X}(\tau^{*}_{P})= \tau^{*}_{\mathcal{T},pre}[\tau^{*}_{\mathcal{T},suf}]^{\omega}$.

\textbf{Optimal decomposition:\space} Since directly applying the whole geometric path $\tau^{*}_{\mathcal{T}}=proj|_{X}(\tau^{*}_{P})$ with long or infinite horizons as guidance for DRL suffers from myopic tendencies (gathering intermediate rewards rather than reaching the desired region), we can decompose $\tau^{*}$ into several sub-paths. To do so, 
we rewrite $\tau^{*}_{P}=\tau^{*}_{0}\tau^{*}_{1}\ldots\tau^{*}_{K}[\tau^{*}_{K+1}\ldots \tau^{*}_{K+l}]^{\omega}$ such that $\forall i\in\left\{0,1,\ldots K+l\right\}$, $\tau^{*}_{i}$ is a sub-trajectory and the product states of each $\tau^{*}_{i}$ have the same automaton components i.e., $q=q_{i}, \forall q_{P}=(x, q)\in\tau^{*}_{i}$. Each segment $\tau^{*}_{i}$ can be projected into workspace as $\tau^{*}_{\mathcal{T},i}$, and we have 
   $$ \tau^{*}_{\mathcal{T}}=proj|_{X}(\tau^{*}_{P})=\tau^{*}_{\mathcal{T}, 0}\tau^{*}_{\mathcal{T}, 1}\ldots\tau^{*}_{\mathcal{T}, K}[\tau^{*}_{\mathcal{T}, K+1}\ldots \tau^{*}_{\mathcal{T}, K+l}]^{\omega}.$$

Let $proj|_{Q}: Q_{P}\rightarrow Q$ denote a projection s.t. $proj|_{Q}(q_{p})=q, \forall q_{P}=(x,q)$ to extract the automaton components, and we have
\begin{equation}
    \tau^{*}_{Q}=proj|_{Q}(\tau^{*}_{P})=q^{*}_{0}q^{*}_{1}\ldots q^{*}_{K}[q^{*}_{K+1}\ldots q^{*}_{K+l}]^{\omega}.\label{eq:automaton}
\end{equation}

As a result, we can define each segment $\tau^{*}_{\mathcal{T}, i}$ as a reach-avoid path $\mathcal{R}_{i}(\mathcal{G}_i,\mathcal{O})$, where $\mathcal{G}_i$ is its goal region and $\mathcal{O}$ is a set of all obstacles. The lasso-type geometric reach-avoid path can be written 
\begin{equation}
    \mathcal{R}_{\mathcal{T}}=(\mathcal{R}_{0}\mathcal{R}_{1}\ldots \mathcal{R}_{K})(\mathcal{R}_{K+1}\ldots \mathcal{R}_{K+l})^{\omega}.\label{eq:decomposition}
\end{equation}

For each reach-avoid task $\mathcal{R}_{i}$, let $x_{\mathcal{R},i}$ denote the destination state, which is also the last state of the segment $\tau^{*}_{\mathcal{T}, i}$. 
From definition~\ref{def:violation_cost}, we 
can obtain:

\begin{lem}
\label{lem:discrete_violation}
For a relaxed PBA, for a continuous trace $\tau_{P}=q^{0}_{P}q^{1}_{P}q^{2}_{P}\ldots$ satisfying the acceptance condition, we can extract an automaton trace  $\tau_{Q}=proj|_{Q}(\tau_{P})=$ $q_{0}q_{1}\ldots q_{K}[q_{K+1}\ldots q^{*}_{K+l}]^{\omega}.$ that satisfies the acceptance condition of the NBA.
The total violation cost {\small ${W_{V}}(\tau_{P})=\stackrel[k=0]{\infty}{\sum} c_{V}(q^{k}_{P},q^{k+1}_{P})$} can be transformed into the finite discrete form using its automaton trace as {\small $\stackrel[k=0]{K-1}{\sum}D_{V}(L_{X}(x_{\mathcal{R},k+1}), \ensuremath{\mathcal{X}}(q_{k}, q_{k+1})) $ $+ \beta\cdot{\stackrel[k=K]{K+l-1}{\sum}D_{V}(L_{X}(x_{\mathcal{R},k+1}), \ensuremath{\mathcal{X}}(q_{k}, q_{k+1}))}$}. 
\end{lem}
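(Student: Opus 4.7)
The plan is to establish the identity by partitioning the infinite sum of per-step violation costs according to the lasso structure of the accepting trace and the automaton-based segmentation described just before the lemma. First, I would invoke the fact that because $\tau_P$ satisfies the acceptance condition and the NBA component changes only finitely often between accepting visits, the projection $\tau_Q=proj|_Q(\tau_P)$ is eventually periodic with prefix of length $K$ and suffix of length $l$, matching Equation~(\ref{eq:automaton}). Grouping maximal runs of consecutive product states that share the same automaton component then yields exactly the segments $\tau^{*}_0,\tau^{*}_1,\ldots,\tau^{*}_{K+l-1}$, each corresponding to one automaton state in $\tau_Q$ and each projecting to a reach-avoid path $\mathcal{R}_i(\mathcal{G}_i,\mathcal{O})$ whose last workspace state is precisely $x_{\mathcal{R},i+1}$.

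Second, I would split the sum $W_V(\tau_P)=\sum_{k=0}^{\infty} c_V(q^{k}_P,q^{k+1}_P)$ into two groups of terms: intra-segment transitions, where $q^{k}=q^{k+1}$ in the NBA, and inter-segment transitions, where the NBA component changes from $q_k$ to $q_{k+1}$. For intra-segment terms, I would argue that by construction of the relaxed PBA (Def.~\ref{def:PBA}) together with the TL-RRT* synthesis in Section~\ref{subsec:minimum}, every intermediate configuration inside a segment lies in collision-free workspace whose label is in the self-loop symbol set $\mathcal{X}(q_i,q_i)$; hence Def.~\ref{def:violation_cost} yields $c_V=0$ on those steps. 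Only the transitions at segment boundaries survive, and at the boundary between $\tau^{*}_k$ and $\tau^{*}_{k+1}$ the associated workspace state is exactly $x_{\mathcal{R},k+1}$, giving a per-boundary cost $D_V\bigl(L_X(x_{\mathcal{R},k+1}),\mathcal{X}(q_k,q_{k+1})\bigr)$.

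Third, I would collect these boundary contributions along the lasso. The $K$ boundaries in the prefix sum directly to $\sum_{k=0}^{K-1} D_V(L_X(x_{\mathcal{R},k+1}),\mathcal{X}(q_k,q_{k+1}))$. The $l$ boundaries inside one period of the suffix each contribute $D_V(L_X(x_{\mathcal{R},k+1}),\mathcal{X}(q_k,q_{k+1}))$ at every repetition of the loop; in the weighted formulation of Section~\ref{subsec:minimum}, where $w_P=c_P+\beta\cdot c_V$ with $\beta\gg 1$ encodes the priority of violation over geometric cost across the infinitely repeated suffix, this collapses to the second sum $\beta\cdot\sum_{k=K}^{K+l-1}D_V(L_X(x_{\mathcal{R},k+1}),\mathcal{X}(q_k,q_{k+1}))$, recovering the claimed finite expression.

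The main obstacle will be the intra-segment vanishing step. Strictly speaking it relies on the planner producing intermediate configurations whose labels enable the corresponding NBA self-loop at $q_i$; I would justify this by (a) the fact that the relaxed PBA always admits a self-loop transition at an automaton state which is used whenever no transition into a new automaton state is triggered, and (b) the minimum-violation optimization with the large weight $\beta$ forcing the planner to select intermediate waypoints with label in $\mathcal{X}(q_i,q_i)$ whenever feasible. Any residual intra-segment cost would simply add to the right-hand side, so the lemma is really a statement about optimal (or well-formed) traces produced by the decomposition, and establishing this boundary behaviour cleanly is the delicate part of the argument.
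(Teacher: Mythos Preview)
The paper does not actually supply a proof of this lemma: it is introduced with the single sentence ``From definition~\ref{def:violation_cost}, we can obtain'' and then simply used. Your segment-based decomposition---splitting the infinite sum at automaton-state changes, arguing that intra-segment steps carry zero violation cost because the self-loop at $q_i$ is enabled by the labels of intermediate configurations, and collecting the inter-segment boundary costs at the states $x_{\mathcal{R},k+1}$---is exactly the unpacking the paper leaves implicit, and it is the natural way to make the claim rigorous. You also correctly flag the intra-segment vanishing step as the delicate point and give the right justification (minimum-violation synthesis with large $\beta$ forces labels into $\mathcal{X}(q_i,q_i)$ whenever possible).

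The one place your argument is strained is the origin of the factor $\beta$ on the suffix sum. You invoke the tree weight $w_P=c_P+\beta\cdot c_V$ from Section~\ref{subsec:minimum}, but that $\beta$ trades violation cost against \emph{geometric} cost in the TL-RRT* objective; it does not by itself explain why infinitely many repetitions of the suffix collapse to a single $\beta$-weighted copy. A cleaner reading is that the literal infinite sum $W_V(\tau_P)$ is finite only when the per-period suffix violation is zero, and the $\beta$ in the lemma's finite surrogate serves to enforce a lexicographic priority---drive suffix violations to zero first, then minimise prefix violations---rather than to record a pointwise equality with the divergent series. Since the paper gives no derivation of its own there is nothing to contradict, but you should make this finiteness/lexicographic argument explicit rather than asserting that the $\beta$ in the lemma and the $\beta$ in $w_P$ arise for the same reason.
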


Lemma~\ref{lem:discrete_violation} states that we can measure the total violation cost of an infinite continuous path using a finite discrete form.
The structure of $\mathcal{R}_{\mathcal{T}}$ can enable a modular structure of distributedly solving each $\mathcal{R}_{i}$ with shorter horizons. Note that, the above process generates the asymptotically minimally-violating plan that is probabilistically complete.

\begin{figure}[!t]
\centering
\centerline{\includegraphics[width=\columnwidth]{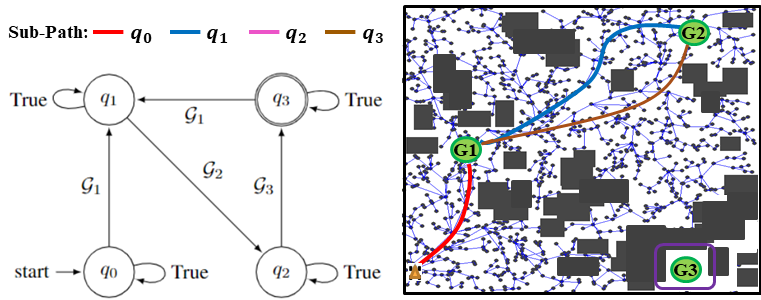}}
\caption{(Left) NBA $\mathcal{B}$ of the LTL formula $\phi_{g_{1}}=\Always\Eventually\mathcal{G}_{1}\land\Always\Eventually\mathcal{G}_{2}\land\Always\Eventually\mathcal{G}_{3}$ for $\phi=\Always\lnot\mathcal{O}\land\phi_{g_{1}}$; (Right)  Decomposed reach-avoid paths (sub-tasks) that minimally violates $\phi$, where $\mathcal{G}_{3}$ is not reachable.
}
\label{fig:Infeasible_example}
\end{figure}

\begin{example}
\label{example2}
Fig.~\ref{fig:Infeasible_example} shows an example of the optimal decomposition for the infeasible case. The LTL task is $\phi=\Always\lnot\mathcal{O}\land\phi_{g_{1}}=\Always\Eventually\mathcal{G}_{1}\land\Always\Eventually\mathcal{G}_{2}\land\Always\Eventually\mathcal{G}_{3}$ over infinite horizons, whereas the goal regions $\mathcal{G}_{3}$ is not accessible.
The resulting NBA and decomposed trajectories of TL-RRT* minimally-violating $\phi$ are shown in Fig.~\ref{fig:Infeasible_example} (left) and (right), respectively, where decomposed reach-avoid paths (sub-tasks) are expressed as $\mathcal{R}_{\mathcal{T}}=\mathcal{R}_{red}(\mathcal{R}_{blue}\mathcal{R}_{brown})^{\omega}$.  
\end{example}

\begin{figure*}
\vskip 0.1in
\begin{center}
\centerline{\includegraphics[scale=0.5 ]{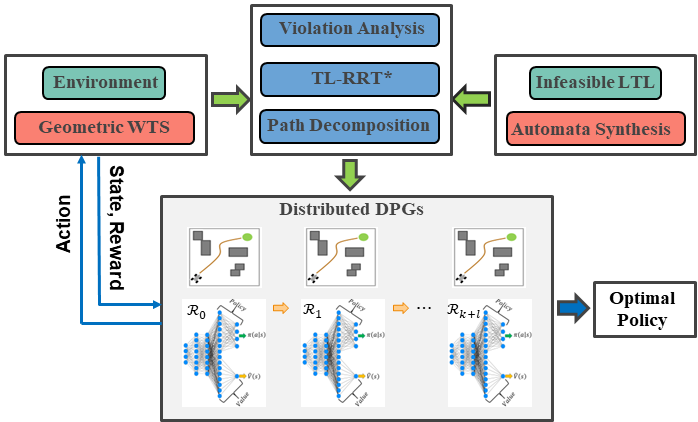}}
\caption{The overall diagram of the framework includes geometric minimally-violating planning for RL guidance, decomposition, and distributed training.}

\label{fig:diagram}
\end{center}
\end{figure*}

\subsection{Reward Design and Distributed DRL\label{subsec:RL}}

\textbf{Reward design:\space} According to the optimal compositional plan $\mathcal{R}_{\mathcal{T}}$, this section designs a reward function for each $\mathcal{R}_{i}$ that is dense in the configuration space to overcome the challenge of the complex environment. We can train every reach-avoid $\mathcal{R}_{i}$ navigation control in the same way distributively. Even if we can straightforwardly use the geometric sub-path for reward design, there exist three issues: (a) without considering actual dynamical systems, controllers cannot strictly track the geometric path; (b) the robot may linger around previous waypoints resulting in sub-optimal solutions;  (c) the waypoints might overlap with obstacles. We address them in the following descriptions, and the intuition of the reward design is shown in Fig.~\ref{fig:running_example} (b), where the robot just needs to move towards the goal and is not required to strictly follow the geometric path.

\textbf{(a):\space} Each geometric path of $\mathcal{R}_{i}$ is a sequence of waypoints  $\tau^{*}_{\mathcal{T}, i}= x_{i,0}x_{i,1}\ldots x_{i,N}$. We construct an r-norm ball for each state $x_{i,j}\in\tau^{*}_{\mathcal{T}, i}, \forall j=0,1\ldots,N$, as $Ball_{r}(x_{i,j})=\left\{x\in X\mid dist(x,x_{i,j})\leq r\right\}$. These balls allow the robot to pass them and move towards the goal as guidance, which we use to design a reward function. The consecutive balls can be regarded as a tube which is popular in the robust control community~\cite{mayne2011tube} to handle path tracking infeasibility.

\textbf{(b):\space} The intuitive approach is to track the minimum distance of the visited balls to the destination so far and utilize it as a constraint in the reward design. Since TL-RRT* is an extension of RRT*~\cite{karaman2011sampling}, it also provides the optimal distance function in the tree from each state to the global final destination denoted as $Cost(x)$. For a sub-path $\tau^{*}_{\mathcal{T}, i}= x_{i,0}x_{i,1}\ldots x_{i,N}$, we compute the distance from each state $x_{i,j}\in\tau^{*}_{\mathcal{T}, i}$ to the destination $x_{i,N}$ as $Dist(x_{i,j}) = Cost(x_{i,j})-Cost(x_{i,N})$. We return the necessary distance information as:
{\small
\begin{equation}
D(x)=
\begin{cases}
Dist(x_{i}|\boldsymbol{x}^*) & \text{if } x\in Ball_{r}(x_{i}|\boldsymbol{x}^*)\\
\infty & \text{otherwise}
\end{cases}
\label{eq:progression}
\end{equation}
}

During each episode of training, a state-action sequence $s_{0}a_{0}s_{1}a_{i}\ldots s_{t}$ up to current time $t$ is split into the state and action sequences  $\boldsymbol{s}_{t}=s_{0}s_{1}\ldots s_{t}$ and $\boldsymbol{a}_{t}=a_{0}a_{1}\ldots a_{t-1}$, respectively. We develop a progression function $D_{min}$ to identify whether the next state is getting closer to the goal region as
$D_{min}(\boldsymbol{s}_{t})=\underset{s\in \boldsymbol{s}_{t}}{\min}\left\{D(Proj(s))\right\}$.

The learning objective is to push $D_{min}$ to decrease at each time-step. We design a constrained reward for each time $t$ of an episode as $R(s^{\times}_{t})=r_{+}>0 \text{ if } D(Proj(s_{t}))< D_{min}(\boldsymbol{s}_{t-1})$. Moreover, such a design also alleviates the first issue (a), since the RL agent is not required to visit every r-norm ball. 

\textbf{(c):\space} We design the reward in different priorities such that if the RL-agent collides with obstacles, it will always return the negative reward and ignore other conditions. In summary, our reward design is the following with decreasing priorities:
{\small
\begin{equation}
\arraycolsep=1.4pt
\begin{array}{lr}
R(s_{t})= r_{-}<0, & \text{if } Proj(s_{t})\in X_{\mathcal{O}},\\
R(s_{t})= r_{++}>0, & \text{if } D(Proj(s_{t}))=0,
\\
R(s_{t})= r_{+}>0, & \text{if } D(Proj(s_{t}))< D_{min}(\boldsymbol{s}_{t-1}),\\
R(s_{t})= 0, & \text{otherwise,}
\end{array}.\label{eq:reward_function}
\end{equation}
}

We evaluate the performance of the reward design~\eqref{eq:reward_function} as:

\begin{thm}
\label{thm:RRT*}
If there exists at least one policy $\pi^{*}_{\mathcal{R},i}$ satisfying the reach-avoid task $\mathcal{R}_{i}$, by selecting $r_{++}$ to be sufficiently larger than $r_{+}$, i.e., $r_{++} \gg r_{+}$, any algorithm that optimizes the expected return $J(\pi)$ is guaranteed to find such an optimal policy $\pi^{*}_{\mathcal{R},i}$.
\end{thm}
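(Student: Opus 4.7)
The plan is to show that the reward structure in~\eqref{eq:reward_function} makes the expected return of any task-satisfying policy strictly exceed that of any task-violating policy, so that any return-maximizing algorithm must converge to a satisfying policy. I would separate policies $\pi \in \Pi$ into two classes: those that, from the initial state, reach the destination $x_{i,N}$ of $\mathcal{R}_i$ (i.e.\ trigger the condition $D(Proj(s_t))=0$) with positive probability and while avoiding $X_\mathcal{O}$, versus those that do not. The argument then reduces to bounding the return in each class and choosing $r_{++}$ so the two bounds separate.

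First I would upper-bound the return of a non-satisfying policy $\pi'$. The key structural observation is the monotonicity of the progression function: by construction $D_{\min}(\boldsymbol{s}_t)$ is non-increasing in $t$, and it can only take values in the finite set $\{D(x_{i,0}),D(x_{i,1}),\dots,D(x_{i,N}),\infty\}$ induced by the $N+1$ waypoints of $\tau^*_{\mathcal{T},i}$. Consequently the event ``$D(Proj(s_t)) < D_{\min}(\boldsymbol{s}_{t-1})$'' fires at most $N$ times in any trajectory, so the total discounted positive reward from the third case of~\eqref{eq:reward_function} is at most $N r_+$ almost surely. Since $\pi'$ never triggers $r_{++}$ and can only collect additional non-positive reward $r_-$ from obstacle cases, we obtain $J(\pi') \le N r_+$.

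Next I would lower-bound $J(\pi^*_{\mathcal{R},i})$. By hypothesis this policy satisfies $\mathcal{R}_i$, so with probability at least some $p^*>0$ it reaches the destination ball $Ball_r(x_{i,N})$ at a random stopping time $T_{\mathrm{reach}}$ while avoiding $X_\mathcal{O}$. Dropping all intermediate non-negative contributions (they only help) and all obstacle contributions (they vanish under $\pi^*_{\mathcal{R},i}$), I get the conservative bound $J(\pi^*_{\mathcal{R},i}) \ge \mathbb{E}^{\pi^*_{\mathcal{R},i}}[\gamma^{T_{\mathrm{reach}}}\,\mathbf{1}\{T_{\mathrm{reach}}<\infty\}]\cdot r_{++} \ge p^*\gamma^{T^{\max}} r_{++}$, where $T^{\max}$ is any finite horizon inside which the satisfying policy reaches the goal with probability at least $p^*$. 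Then choosing $r_{++} > \frac{N r_+}{p^*\gamma^{T^{\max}}}$ (in particular $r_{++}\gg r_+$) yields $J(\pi^*_{\mathcal{R},i}) > J(\pi')$ for every non-satisfying $\pi'$, so the $\arg\max_\pi J(\pi)$ computed by any return-maximizing algorithm must lie in the satisfying class, which gives the claim.

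The main obstacle is formalizing the lower bound: I need the existence of some finite horizon $T^{\max}$ and probability $p^*>0$ so that $\mathbb{E}[\gamma^{T_{\mathrm{reach}}}]$ is bounded below by a positive constant independent of $r_{++}$. For a continuous, stochastic cl-MDP this is nontrivial; the cleanest route is to invoke the hypothesis that $\pi^*_{\mathcal{R},i}$ \emph{satisfies} $\mathcal{R}_i$, which I would take to mean it reaches the goal ball in finite expected time while avoiding obstacles, so that by Markov's inequality $\Pr(T_{\mathrm{reach}}\le T^{\max}) \ge p^*$ for some $T^{\max}<\infty$. A secondary subtlety is ruling out the degenerate case where a non-satisfying policy could exploit $r_{++}$ by touching $Ball_r(x_{i,N})$ without really solving $\mathcal{R}_i$; this is handled by the obstacle-priority clause and by noting that touching $D=0$ already means the policy reached the destination, so it is in fact satisfying by definition.
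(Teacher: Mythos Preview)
Your proposal is correct and follows essentially the same contradiction/separation argument as the paper: upper-bound the return of any non-satisfying policy by the finite number of $r_{+}$ triggers, lower-bound the return of a satisfying policy via the discounted $r_{++}$ contribution, and pick $r_{++}$ large enough to separate the two. The only substantive difference is in the lower bound: the paper assumes the satisfying policy \emph{stays} at the destination and collects $r_{++}$ perpetually, yielding $J(\pi^{*}_{\mathcal{R},i})\geq \gamma^{\bar n}\,r_{++}/(1-\gamma)$, whereas you conservatively credit only a single $r_{++}$ receipt with probability $p^{*}$ within horizon $T^{\max}$; your bound is weaker but more robust to the stochastic cl-MDP setting and does not rely on the (implicit) assumption that the agent can remain in the goal region indefinitely.
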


\begin{proof}
Theorem~\ref{thm:RRT*} can be proved by contradiction.
Suppose we have a policy $\bar{\pi}_{\mathcal{R},i}$ that is optimal and does not accomplish reach-avoid task $\mathcal{R}_{i}$, which means the robot derived by $\bar{\pi}_{\mathcal{R},i}$ will not reach the goal station. 
Recall that we have $N$ r-norm balls (waypoints), and the best case of $\bar{\pi}_{\mathcal{R},i}$ is to consecutively pass all these balls sequentially without reaching the destination. We obtain the upper-bound 
{\small
\begin{equation}
    J(\bar{\pi}_{\mathcal{R},i})< r_{+}\cdot\frac{1-\gamma^{N}}{1-\gamma}\label{eq:upper}
\end{equation}
}

Per assumption 1, we  can find another policy $\pi^{*}_{\mathcal{R},i}$ that reaches and stays at the destination repetitively receiving reward $r_{++}$.
The worst case of $\pi^{*}_{\mathcal{R},i}$ is to pass no r-norm waypoints. In this case, we obtain the lower-bound 
{\small
\begin{equation}
    J(\pi^{*}_{\mathcal{R},i})\geq \underline{M}r_{++}\cdot\frac{1}{1-\gamma}\label{eq:lower},
\end{equation}}%
where $\underline{M}=\gamma^{\bar{n}}$, and $\bar{n}$ is maximum number of steps reaching the goal station. Consequently, for~\eqref{eq:upper} and~\eqref{eq:lower}, if we select
{\small
\begin{equation}
    r_{++}\geq \frac{r_{+}\cdot(1-\gamma^{N})}{\underline{M}},
\end{equation}}%
we guarantee that $J(\pi^{*}_{\mathcal{R},i})>J(\bar{\pi}_{\mathcal{R},i})$, which
contradicts the fact that $\bar{\pi}_{\mathcal{R},i}$ is an optimal policy.
\end{proof}

Note that even though training deep neural networks to optimize policies is a nonlinear regression process, we can use neural network verification techniques, e.g., \cite{everett2021neural} to verify and improve the learned policy $\pi^{*}_{\mathcal{R},i}$ for a reach-avoid task.

\begin{rem}
Employing $D_{min} (\boldsymbol{s}_{t})$ to design a reward function results in a non-Markovian property, since it relies on the past history $\boldsymbol{s}_{t}$, whereas the reward function~\eqref{eq:reward_function} only depends on current state. 
To address this, we can leverage the structure of the product MDP~\cite{baier2008}, and augment the current state $s_{t}$ with the index of the closest visited waypoint, i.e., $\underset{x\in \boldsymbol{x}_{t}}{\arg\min}\left\{D(x)\right\}$, to construct a product state to recover the Markovian property. 
\end{rem}

\textbf{Distributed DRL:\space}
For each $\mathcal{R}_{i}$, we can apply the above approach to an existing off-the-shelf DRL algorithms, e.g., SAC~\cite{ haarnoja2018soft}, to train the optimal policy $\pi^{*}_{\mathcal{R}, i}$. The process can be repeated for all $\mathcal{R}_{i}, \forall i\in\left\{0,1,\ldots K+l\right\}$ of  $\mathcal{R}_{\mathcal{T}}=(\mathcal{R}_{0}\mathcal{R}_{1}\ldots \mathcal{R}_{K})(\mathcal{R}_{K+1}\ldots 
\mathcal{R}_{K+l})^{\omega}$ in a distributed manner.
We then concatenate them to obtain the globally optimal policy as 
\begin{equation}
    \pi^{*}_{\phi}=(\pi^{*}_{\mathcal{R},0}\pi^{*}_{\mathcal{R},i}\ldots \pi^{*}_{\mathcal{R},K})(\pi^{*}_{\mathcal{R},K+1}\ldots \pi^{*}_{\mathcal{R},K+l})^{\omega}, \label{eq:policy}
\end{equation}
where $\pi^{*}_{\phi}$ is the global policy satisfying LTL task $\phi$ and containing a set of neural network parameters. Note that if we train each reach-avoid task $\mathcal{R}_{i}$ individually, we need to randomize the initial velocity and acceleration conditions of state $S$, since the ending condition of completing a policy $\pi^{*}_{\mathcal{R}, i}$ results in different starting states for the next policy $\pi^{*}_{\mathcal{R},i+1}$. This enables a smooth concatenation instead of re-training the whole global policy $\pi^{*}_{\theta}$. 

The overall framework summary can be found in Fig.~\ref{fig:diagram}. It mainly includes three stages i.e., minimally-violating planning for the infeasible LTL specification as RL guidance, path decomposition, and distributed training.

\begin{thm}
\label{thm:minimum-violation}
If for all $i\in\left\{0,1,\ldots K+l\right\}$, $\pi^{*}_{\mathcal{R}, i}$ satisfies its corresponding reach-avoid task $\mathcal{R}_{i}$ of $\mathcal{R}_{\mathcal{T}}$ in~\eqref{eq:decomposition} under any initial states (velocities, accelerations), then their concatenation $\pi^{*}_{\phi}$~in~\eqref{eq:policy} satisfies the LTL task $\phi$ with minimum-violation guarantees.
\end{thm}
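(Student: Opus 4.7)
The plan is to chain together three facts: (i) the trace $\tau^{*}_{P}$ produced by TL-RRT* on the relaxed PBA is asymptotically minimum-violation and accepting by the construction in Section~\ref{subsec:minimum}; (ii) the concatenated policy $\pi^{*}_{\phi}$ in~\eqref{eq:policy} induces a continuous trajectory whose projection into the automaton coincides with $\tau^{*}_{Q}$ in~\eqref{eq:automaton}; and (iii) Lemma~\ref{lem:discrete_violation} lets us match the continuous violation cost to the finite discrete violation cost of $\tau^{*}_{P}$, so Lemma~\ref{lem:least-violation} delivers the conclusion.

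More concretely, I would first recall from Section~\ref{subsec:minimum} that $\mathcal{R}_{\mathcal{T}}=(\mathcal{R}_{0}\ldots\mathcal{R}_{K})(\mathcal{R}_{K+1}\ldots\mathcal{R}_{K+l})^{\omega}$ is obtained by projecting and segmenting the optimal PBA trace $\tau^{*}_{P}$, so each $\mathcal{R}_{i}$ inherits: (a) a fixed automaton component $q^{*}_{i}$ along its interior; (b) a destination waypoint $x_{\mathcal{R},i}$ whose label $L_{X}(x_{\mathcal{R},i+1})$ enables the PBA transition $q^{*}_{i}\to q^{*}_{i+1}$ with violation cost $D_{V}(L_{X}(x_{\mathcal{R},i+1}),\mathcal{X}(q^{*}_{i},q^{*}_{i+1}))$; and (c) an obstacle-free path. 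By hypothesis each $\pi^{*}_{\mathcal{R},i}$ drives the agent from any admissible initial condition to $x_{\mathcal{R},i}$ while staying in $X_{free}$, so the induced continuous run under $\pi^{*}_{\phi}$ visits $x_{\mathcal{R},0},x_{\mathcal{R},1},\ldots,x_{\mathcal{R},K}$ and then loops through $x_{\mathcal{R},K+1},\ldots,x_{\mathcal{R},K+l}$ infinitely (smooth concatenation is permitted because each $\pi^{*}_{\mathcal{R},i}$ was trained to be robust to arbitrary initial velocities/accelerations inherited from $\pi^{*}_{\mathcal{R},i-1}$).

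Lifting this trajectory back to the relaxed PBA, the automaton reading then traverses exactly $q^{*}_{0}q^{*}_{1}\ldots q^{*}_{K}[q^{*}_{K+1}\ldots q^{*}_{K+l}]^{\omega}=\tau^{*}_{Q}$, so the induced PBA trace is lasso-type, intersects $Q^{F}_{P}$ infinitely often (inherited from $\tau^{*}_{P}$), and therefore meets the B\"uchi acceptance condition. Applying Lemma~\ref{lem:discrete_violation}, the total discounted violation along the concatenated run collapses to the finite discrete sum over the boundary transitions $q^{*}_{k}\to q^{*}_{k+1}$ with the multiplier $\beta$ on the suffix. This is exactly the quantity minimized in the tree construction of Section~\ref{subsec:minimum}, so $W_{V}$ of the induced trace equals $W_{V}(\tau^{*}_{P})$. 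Lemma~\ref{lem:least-violation} then yields that $\pi^{*}_{\phi}$ satisfies $\phi$ with minimum-violation guarantees.

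The main obstacle I expect is showing rigorously that within each segment $\mathcal{R}_{i}$ no extra automaton transitions occur, i.e., that intermediate labels encountered while a sub-policy travels between $x_{\mathcal{R},i-1}$ and $x_{\mathcal{R},i}$ do not silently shift the automaton away from $q^{*}_{i}$ and thereby inject or omit violation cost that Lemma~\ref{lem:discrete_violation} fails to see. I would handle this by appealing to the relaxed transition rule in Def.~\ref{def:PBA}, under which a self-loop $q^{*}_{i}\to q^{*}_{i}$ is always available and is charged zero incremental violation cost at the continuous level whenever the intermediate label matches an enabling symbol, while mismatches are already absorbed into the discrete boundary term $D_{V}(L_{X}(x_{\mathcal{R},i+1}),\mathcal{X}(q^{*}_{i},q^{*}_{i+1}))$ used to compute the tree weights. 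A secondary subtlety is the choice of $\beta\gg1$: it must be propagated from the discrete optimization to guarantee that no alternative concatenation of sub-policies could trade a small geometric gain for a violation increase, which follows directly from the weight definition $w_{P}=c_{P}+\beta\cdot c_{V}$ used in the TL-RRT* minimization.
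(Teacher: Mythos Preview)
Your proposal is correct and follows essentially the same route as the paper: argue that each admissible path under $\pi^{*}_{\phi}$ realizes the decomposition $\mathcal{R}_{\mathcal{T}}$ and hence the automaton trace $\tau^{*}_{Q}$, then invoke Lemma~\ref{lem:discrete_violation} together with the optimality of TL-RRT* to conclude acceptance and minimum total violation. Your write-up is in fact more detailed than the paper's---you explicitly cite Lemma~\ref{lem:least-violation} and flag the issue of spurious intermediate automaton transitions within a segment, which the paper's proof does not discuss at all.
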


\begin{proof}
The intuition is to prove that every admissible path  path $\boldsymbol{s}_{\infty}^{\pi^{*}_{\phi}}\in\boldsymbol{S}_{\infty}^{\pi^{*}_{\phi}}$ under the optimal policy $\pi^{*}_{\phi}$ satisfies the LTL specification $\phi$ with minimum total violation cost.
The proof is based on the decomposition property in Section~\ref{subsec:minimum}.

First, the geometric projection of every admissible path $\boldsymbol{s}_{\infty}^{\pi^{*}_{\phi}}\in\boldsymbol{S}_{\infty}^{\pi^{*}_{\phi}}$ satisfies the lasso-type reach-avoid $\mathcal{R}_{\mathcal{T}}$ in~\eqref{eq:decomposition}, which is associated with a lasso-type automaton path $\tau^{*}_{Q}$ as~\eqref{eq:automaton}. From Lemma~\ref{lem:discrete_violation} and the optimality of TL-RRT*~\cite{luo2021abstraction}, $\tau^{*}_{Q}$ satisfies the acceptance condition of the NBA of LTL formula $\phi$ and has the minimum total violation cost. 
\end{proof}

\begin{rem}
   Due to the conclusion of minimally-violating the LTL task from Theorem~\ref{thm:minimum-violation}, our framework generalizes to both feasible and infeasible cases for a given LTL task.
\end{rem}

\section{EXPERIMENTAL RESULTS}

We implement the framework on two different nonlinear dynamical systems for various LTL tasks over both finite and infinite horizons. The algorithm test focuses on infeasible cases and complex cluttered environments, where dense cluttered obstacles are randomly sampled. We also show that the framework can complete feasible LTL tasks exactly.
We apply SAC~\cite{haarnoja2018soft} as a state-of-art DRL algorithm for all baselines.

\textbf{Baseline Approaches:} 
From the learning perspective, we refer to our distributed framework as "RRT*" or "D-RRT*", and compare it against three baselines: (i) The relaxed TL-based multi-objective rewards in~\cite{Cai2020, Cai2021modular} referred to as "TL",  for the single LTL task;
(ii) For the goal-reaching task $\phi$, the baseline referred to as "NED" designs the reward based on the negative Euclidean distance between the robot and destination; (iii) For a complex LTL task, instead of decomposition, this baseline directly applies the reward scheme~\eqref{eq:reward_function} for the global trajectory $\tau^{*}_{F}=\tau^{*}_{pre}[\tau^{*}_{suf}]^{\omega}$ referred to as "G-RRT*".

From the perspective of infeasible LTL tasks, we compare with the work~\cite{hasanbeig2019reinforcement} of visiting as many accepting sets of LDGBA~\cite{Sickert2016} as possible, and we empirically show the improvement over our prior work~\cite{cai2022overcoming} that assumes the feasible cases.

\begin{figure}
\centering
\centerline{\includegraphics[width=\columnwidth]{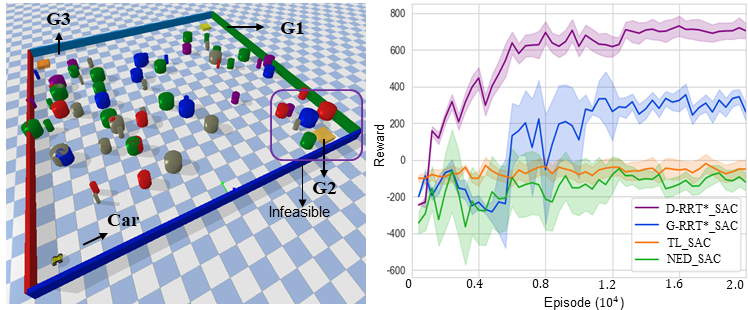}}
\caption{Baselines comparison of the infeasible task $\phi_{1,inf}$ for a $2$D Dubins car in the Pybullet environment with cluttered obstacles.}
\label{fig:vehicle}
\end{figure}

\textbf{Autonomous Vehicle\text{ }} We first implement the Dubins car model in the Pybullet\footnote{\url{https://pybullet.org/wordpress/}} physics engine shown in Fig.~\ref{fig:vehicle}. We consider the  surveillance LTL task $\phi_{1,inf}=\Always\lnot\mathcal{O}\land\Always\Eventually(\mathcal{G}_{1}\land\Eventually(\mathcal{G}_{2}\land\Eventually(\mathcal{G}_{3}\land\Eventually(\mathcal{G}_{init})))$ that requires sequentially visiting regions labeled as $\mathcal{G}_{1}, \mathcal{G}_{2}, \mathcal{G}_{3}$, and the robot's initial position infinitely often. Its finite-horizon version can be expressed as  
 $\phi_{1,fin}=\Always\lnot\mathcal{O}\land\Eventually(\mathcal{G}_{1}\land\Eventually(\mathcal{G}_{2}\land\Eventually(\mathcal{G}_{3}\land\Eventually(\mathcal{G}_{init})))$ by removing the \emph{always} operator $\Always$. Both tasks are infeasible since the region $\mathcal{G}_{2}$ is surrounded by obstacles.
Fig.~\ref{fig:vehicle} shows the learning curves of task $\phi_{1,inf}$ compared with different baselines, and $\phi_{1,fin}$ has the same comparison results. We can observe that our framework can provide better performance than other baselines under the challenge of complex cluttered environments.

\begin{figure}
\centering
\centerline{\includegraphics[width=\columnwidth]{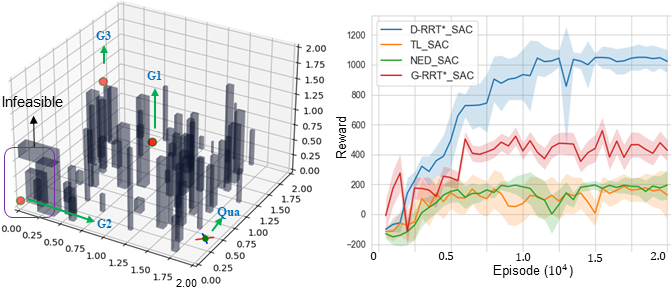}}
\caption{ Baselines comparison of the infeasible task $\phi_{2,inf}$ in the $3$D complex cluttered environment.}
\label{fig:drone}
\end{figure}

\textbf{Quadrotor Model\text{ }} We test our algorithms in a $3$D environment with Quadrotor\footnote{\url{https://github.com/Bharath2/Quadrotor-Simulation/tree/main/PathPlanning}} dynamics shown in Fig.~\ref{fig:drone}. It demonstrates that our model-free framework is capable of handling complex cluttered environments with high dimensions.
The LTL specification is given as $\phi_{2,inf}=\Always\lnot\mathcal{O}\land\Always\Eventually\mathcal{G}_{1}\land\Always\Eventually\mathcal{G}_{2}\land\Always\Eventually\mathcal{G}_{3}$, which requires the quadrotor to navigate regions of $\mathcal{G}_{1}, \mathcal{G}_{2}, \mathcal{G}_{3}$ infinitely often without specific orders. Its finite version can also be defined as $\phi_{2,fin}=\Always\lnot\mathcal{O}\land\Eventually\mathcal{G}_{1}\land\Eventually\mathcal{G}_{2}\land\Eventually\mathcal{G}_{3}$. These two task are also infeasible and, 
their comparisons of learning performances are the same. Due to page limitation, we only show the learning results for the task $\phi_{2,inf}$ in Fig~\ref{fig:drone}.

\textbf{Feasibility Generalization\text{ }} According to theorem~\ref{thm:minimum-violation}, the advantage of minimally violating a given task allows generalization of feasible cases. To show the metric, we remove the  highlighted obstacles for the region $\mathcal{G}_{2}$ in both environments, and  implement our algorithms. The feasible experimental results shown in the video demo demonstrate that our framework can satisfy feasible tasks exactly.

\begin{figure}[!t]\centering
	{{
	    \scalebox{.65}{
			\begin{tikzpicture}[shorten >=1pt,node distance=2.5cm,on grid,auto] 
			\node[state,initial] (q_0)   {$q_0$}; 
			\node[state] (q_1) [right=of q_0]  {$q_1$};
			\node[state] (q_2) [right=of q_1]  {$q_2$};
			\node[state,accepting, label=right:$\textcolor{black}{F=\{q_3\}}$] (q_3) [right=of q_2]  {$q_3$};
			\node[state] (q_4) [below=of q_0]  {$q_{sink}$};
			\path[->] 
			(q_0) edge [bend left=0] node {$\mathcal{G}_{1}$} (q_1)
			(q_0) edge [loop above] node {$\lnot\left(\mathcal{G}_{1}\vee \mathcal{O}\right)$} (q_0)
			(q_0) edge [bend left=0] node {$\mathcal{O}$} (q_4)
			(q_1) edge [bend left=0] node {$\mathcal{G}_{2}$} (q_2)
			(q_1) edge [loop above] node {$\lnot\left(\mathcal{G}_{2}\vee \mathcal{O}\right)$} (q_1)
			(q_1) edge [bend left=0] node {$\mathcal{O}$} (q_4)
			(q_2) edge [bend left=0] node {$\mathcal{G}_{3}$} (q_3)
			(q_2) edge [loop above] node {$\lnot\left(\mathcal{G}_{3}\vee \mathcal{O}\right)$} (q_2)
			(q_2) edge [bend left=0] node {$\mathcal{O}$} (q_4)
			(q_3) edge [loop above] node {$\mathcal{G}_{3}$} (q_3)
			(q_4) edge [loop right] node {$\operatorname{True}$} (q_4);
			\end{tikzpicture}
			}
			}}
    \caption{\label{fig:automaton} LDGBA of the LTL formula $\phi_{1,fin}=\Always\lnot\mathcal{O}\land\Eventually(\mathcal{G}_{1}\land\Eventually(\mathcal{G}_{2}\land\Eventually(\mathcal{G}_{3}))$ with only one accepting set $F=\{q_3\}$.}
    \vspace{-0.3cm}
\end{figure}

\textbf{5.5 Infeasibility Analysis\text{ }}
For each environment and dynamical system, we increase the task complexity by randomly sampling $12$ obstacle-free goal regions in both environments and set the specifications as
$\phi_{3,fin}=\Always\lnot\mathcal{O}\land(\left(\Eventually\mathtt{\mathcal{G}_{1}}\land\Eventually\mathtt{\left(\mathcal{G}_{2}\land\Eventually\mathtt{\ldots\land \Eventually\mathcal{G}_{12}}\right)}\right)$, and
$\phi_{3,inf}=\Always\lnot\mathcal{O}\land\Always\Eventually\mathcal{G}_{1}\land\Always\Eventually\mathcal{G}_{2}\ldots\land\Always\Eventually\mathcal{G}_{12}$. We repeat the random sampling for $50$ trials and record the success rates of generating valid plans for two environments. The results for these task $\phi_{3,fin}, \phi_{3,inf}$
compared with our prior work~\cite{cai2022overcoming} referred to as "feasible" are shown in Fig.~\ref{fig:analysis} (a).
Due to the random process, the target regions may be surrounded by obstacles, and the baseline returns no solutions, whereas our framework always finds minimally-violating trajectories.

\begin{figure}
\centering
\centerline{\includegraphics[width=\columnwidth]{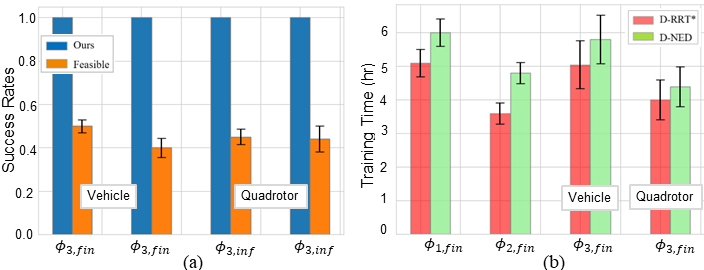}}
\caption{ (a) Success rates of random goal regions compared with~\cite{cai2022overcoming}. (b) Training time for each reach-aviod task compared with distributed baseline "NED" referred as "D-NED".}
\label{fig:analysis}
\end{figure}

From the aspect of automaton-based rewards, the work~\cite{Cai2020} directly applies a linear combination of violation cost and automaton acceptance reward to formulate a MORL process, 
with no guarantees on satisfaction and violation. Our proposed framework leveraging planning methods as guidance improves the performance. This seen in the learning comparisons in Fig.~\ref{fig:vehicle} and Fig.~\ref{fig:drone}. 
Moreover, we also compare the work~\cite{hasanbeig2019reinforcement} that addresses infeasible cases for discrete MDP model. It finds the policy that satisfies the given LTL tasks as much as possible by intersecting with the maximum number of accepting sets of an LDGBA~\cite{Sickert2016}. However,
the LDGBAs of some LTL formulas only have one accepting set. As an example, Fig.~\ref{fig:automaton} shows the LDGBA of 
$\phi_{1,fin}=\Always\lnot\mathcal{O}\land\Eventually(\mathcal{G}_{1}\land\Eventually(\mathcal{G}_{2}\land\Eventually(\mathcal{G}_{3}))$ for an autonomous vehicle that has only one accepting set $\{q_3\}$. The same case also applies to the LTL formula $\phi_{1,inf}, \phi_{3,fin}$. In such cases, the work~\cite{hasanbeig2019reinforcement} returns no solution since the only accepting set is not accessible.

\textbf{5.6 Training efficiency\text{ }} We record the training time for each reach-avoid sub-task for the finite-horizon LTL tasks i.e., $\phi_{1,fin}, \phi_{2,fin}, \phi_{3,fin}$ in two environments. We compared with the baseline "NED" that distributedly solves each reach-avoid sub-task using distance based rewards. The results in Fig.~\ref{fig:analysis} (b) shows that our framework is more efficient.

\section{CONCLUSION}

In this article, we propose a model-free framework using DRL to learn continuous controllers for completing complex LTL navigation tasks. We overcome the challenge of infeasible LTL specifications in complex cluttered environments. To minimally violate the LTL task, we apply path planning methods as guidance of DRL that can overcome challenges of cluttered environments and guarantee minimal total violation cost, which can decompose the task into sub-reach-avoid missions.
 Based on the advantage of minimal violation, our work generalizes both feasible and infeasible cases. In the future, we will consider safety-critical exploration during learning, and shrinking the gap of sim-to-real to deploy neural network controllers in real-world robots.

\bibliographystyle{IEEEtran}
\bibliography{reference}
\end{document}